\newtheorem{theorem}{Theorem}
\newtheorem{lemma}[theorem]{Lemma}
\newcommand{\E}{\mathbb{E}}
\DeclareMathOperator*{\arginf}{arg\,inf}
\newcommand{\set}[2]{\left\{ #1 \,\middle|\, #2 \right\}}
\newcommand{\KL}[2]{D_{KL}\!\left(#1 \middle\| #2 \right)}
\newcommand{\ELBO}[2]{\text{ELBO}\!\left(#1, #2 \right)}
\newcommand{\OT}[3]{OT_{#3}\!\left(#1, #2 \right)}
\newcommand{\mean}[2]{\E_{#2} \! \left[ #1 \right]}
\newcommand{\de}[1]{\text{d}#1}
\title{Wasserstein variational gradient descent: From semi-discrete optimal transport to ensemble variational inference}
\author{
  Luca Ambrogioni \\
  Radboud University\\
  \texttt{l.ambrogioni@donders.ru.nl} \\
  \And
  Umut Güçlü \\
  Radboud University\\
  \texttt{u.guclu@donders.ru.nl} \
  \And
  Marcel A. J. van Gerven \\
  Radboud University\\
  \texttt{m.vangerven@donders.ru.nl} \\
}
\begin{document}
% \nipsfinalcopy is no longer used

\maketitle
\begin{abstract}
Particle-based variational inference offers a flexible way of approximating complex posterior distributions with a set of particles. In this paper we introduce a new particle-based variational inference method based on the theory of semi-discrete optimal transport. Instead of minimizing the KL divergence between the posterior and the variational approximation, we minimize a semi-discrete optimal transport divergence. The solution of the resulting optimal transport problem provides both a particle approximation and a set of optimal transportation densities that map each particle to a segment of the posterior distribution. We approximate these transportation densities by minimizing the KL divergence between a truncated distribution and the optimal transport solution. The resulting algorithm can be interpreted as a form of ensemble variational inference where each particle is associated with a local variational approximation. 

\end{abstract}
\section{Introduction} 
Stochastic variational inference (VI) is becoming a cornerstone of modern machine learning research as it reduces Bayesian inference to a stochastic optimization problem that can be automatically solved using deep learning frameworks \cite{hoffman2013stochastic, ranganath2014black, rezende2014stochastic}. Particle-based VI methods have recently gained substantial popularity with the introduction of \emph{Stein variational gradient descent} (SVGD) \cite{liu2016stein}. In a particle-based variational method the posterior distribution is approximated as the stationary distribution of a system of interacting particles \cite{dai2015provable, liu2016stein}. In SVGD this dynamics can be decomposed into a steepest ascent term and a repulsive interaction between particles that avoids the collapse of the approximate posterior into its modes. While the dynamical system has been proven to converge to the exact posterior at the limit of infinitely many particles, the finite particle dynamics suffers from the fact that the repulsive force does not depend on the true posterior and is instead induced by an arbitrarily chosen kernel function \cite{liu2016stein}. This is particularly problematic when the dimensionality of the latent space is high since the performance of kernel methods rapidly degrade with the dimensionality. Intuitively, the repulsive effect should not be dependent on an arbitrary kernel. Instead, it should reflect an 'explaining away' phenomenon, where regions of the posterior that are already captured by a particle should not influence the dynamics of the other particles. 

Optimal transport theory is becoming another fundamental part of modern machine learning research~\cite{cuturi2013sinkhorn, arjovsky2017wasserstein, gulrajani2017improved, tolstikhin2017wasserstein}. Recently, optimal transport theory has been used to obtain a new flexible form of black-box stochastic variational Bayesian inference that replaces the usual Kullback-Leibler (KL) divergence with Wasserstein divergences \cite{ambrogioni2018wasserstein}. In this paper we introduce a particle-based form of Wasserstein variational inference based on the theory of semi-discrete optimal transport. We call this method Wasserstein variational gradient descent (WVGD). We approximate the posterior distribution by minimizing a semi-discrete optimal transport divergence. In this formulation mode collapse is avoided thanks to the explaining away phenomenon without resorting to repulsive forces. Importantly, the optimal transport formulation does not only provide the particle approximation but also a set of optimal transportation densities that map each particle to a continuous distribution that models a part of the posterior. Once we assume a parametric form, these transportation densities behave like local version of the parametric models used in conventional stochastic VI, each modeling the probability density in a region around the particle. Therefore, our approach can also be seen as a form of ensemble VI. 

\section{Related work} 
The use of optimal transport divergences in variational Bayesian inference problems was introduced in \cite{ambrogioni2018wasserstein}. However, Wasserstein variational inference is only applicable in case of joint-contrastive (amortized) inference problems, while WVGD can also be used without inference amortization. The WVGD method is a form of particle-based VI. Particle-based VI can be seen as an intermediate between sampling methods such as MCMC and conventional VI as it combines the non-parametric nature of samplers with an optimization point of view. The most popular particle-based VI algorithm is SVGD \cite{liu2016stein}. SVGD has been applied in several domains including reinforcement learning \cite{haarnoja2017reinforcement, liu2017stein} and meta learning \cite{kim2018bayesian, feng2017learning}. The theory behind our variational approach is closely related to optimal transport clustering \cite{laclau2017co, mi2018variational}. In these clustering approaches a series of medoids are trained to minimize the Wasserstein distance with a target distribution.

\section{Particle-based variational inference} 
In an approximate Bayesian inference problem, the aim is to approximate the posterior distribution of a latent variable $z$ given a set of observations $x$. The posterior distribution has the following form:
\begin{equation}
p(z|x) = \frac{p(x|z)p(z)}{p(x)}~.
\end{equation}
Unfortunately, the normalization constant $p(x)$ is usually intractable and has to be approximated. The basic idea behind VI is to approximate the real posterior with a more tractable family of parameterized densities $q_w(z)$ by minimizing a loss functional. The most commonly used functional is the Kullback–Leibler divergence:
\begin{equation}
\KL{q_w(z)}{p(z|x)} = \mean{\log{\frac{q_w(z)}{p(z|x)}}}{z \sim q(z)}~.
\end{equation}
Usually $q_w(z)$ has a parametric form that can range from a simple diagonal Gaussian to an highly complex distribution induced by a deep generative model \cite{rezende2015variational, kingma2016improved, dinh2016density}. In a (weighted) particle-based variational framework, the approximate distribution is a linear combination of delta functions:
\begin{equation}
q_N(z) = \sum_j^N \beta_j \delta(z - z^j)~,
\end{equation}
where $z^j$ is the coordinate of the $j$-th particle and $\beta_j$ is the weight associated with the particle. Ideally we would like to find the optimal set of particles by minimizing the divergence between $q_N$ and $p$. Unfortunately the KL divergence is not defined for distributions such as $q_N$ that are not absolutely continuous with respect to the Lebesgue measure. SVGD circumvents this problem by defining an interacting particle dynamics whose asymptotic distribution at the thermodynamic limit ($N \rightarrow \infty$) converges to the $p(z|x)$ under the topology induced by the KL divergence. The problem is well-posed since the infinite ensemble distribution is absolutely continuous \cite{NIPS2017_6904}. However, the SVGD dynamics is not necessarily optimal in practical applications when only a finite number of particles are used. Specifically, in SVGD individual particles cannot explain away any finite amount of probability mass since in the ideal asymptotic ensemble each particle only contributes infinitesimally to the overall posterior. Consequently, when $N$ is finite the proper coverage of the posterior depends on the choice of a kernel function that regulates the repulsive interactions between the particles. 

\section{Semi-discrete optimal transport} 
Optimal transport divergences measure the deviation between two distributions as the cost of optimally transporting a distribution to the other. An optimal transport divergence is defined by the following optimization problem:
\begin{equation}
\OT{q}{p}{c} = \inf_{\gamma(z,z') \in \Gamma[q,p]} \mean{c(z, z')}{z, z' \sim \gamma(z, z')}~,
\end{equation}
where $\Gamma[q,p]$ is the set of joint distributions having $q$ and $p$ as respectively the first and the second marginal. An important advantage of optimal transport divergences is that they can be used to compare discrete and continuous distributions. This form of optimal transport is called semi-discrete and can be formulated as follows \cite{peyre2017computational}:
\begin{align}
\OT{q_N}{p}{c} &= \inf_{\gamma \in \Gamma[q_N,p]} \mean{c(z, z')}{z, z' \sim \gamma(z, z')} \notag \\
&= \inf_{\zeta(z'|z) \in Z[q_N,p]} \sum_j^N \beta_j \mean{c(z^j, z')}{z' \sim \zeta(z'|z)}~,
\end{align}
where $Z[q,p]$ is the set of conditional distributions that fulfill the marginalization constraint: 
\begin{equation}
Z[q_N,p] = \set{\zeta(z'|z)}{\sum_j^N \beta_j \zeta(z'|z^j) = p(z')}~.
\end{equation}

\section{Particle-based inference with semi-discrete optimal transport} 
Our aim is to obtain a particle-based variational approximation by minimizing the optimal transport divergence between a weighted set of particles and the posterior distribution. Using semi-discrete optimal transport, we can formulate this optimal finite particle approximation of the posterior as the solution of the following joint optimization problem:
\begin{align}
(z_1^*,...,z_N^*, \beta_1^*,...,\beta_N^*) &=\arginf_{z_1,...,z_N} \arginf_{\beta_1,...,\beta_N} \OT{q_N}{p}{c}~ \notag \\
&= \arginf_{z_1,...,z_N} \arginf_{\beta_1,...,\beta_N} \left[ \inf_{\zeta(z'|z) \in Z[q_N,p]} \sum_j^N \beta_j \mean{c(z^j, z')}{z' \sim \zeta(z'|z)} \right]~ \notag~,
\end{align}
where the notion of optimality depends on the cost function $c$. The transportation densities $\zeta(z'|z)$ map each particle to a component of the posterior distribution. In other words, the transportation densities can be seen as emission models that spread the probability mass centered in a particle to its surroundings. 

\section{Formal solution of the optimal transport problem} 
The solution of the semi-discrete optimal transport problem has several interesting properties. The support sets of the transportation densities are elements of a tessellation of the $z$ space into non-overlapping sets. In the general case, these sets can be found using computational geometry algorithms \citep{aurenhammer1987power} and quasi-Newton solvers \cite{merigot2011multiscale}. Fortunately, the problem of finding these cells greatly simplify if we simultaneously optimize the transportation densities and the weights of the discrete distribution as stated in the following theorem:
\begin{theorem}[Formal solution of the optimal transport problem] \label{th: formal solution}
The optimization problem 
\begin{equation}
\arginf_{\beta_1,...,\beta_N} \left[ \inf_{\zeta(z'|z) \in Z[q_N,p]} \sum_j^N \beta_j \mean{c(z^j, z')}{z' \sim \zeta(z'|z^j)} \right]~
\end{equation}
is solved by the following tessellation:
\begin{equation}
L_j = \set{z'}{\forall{k}:~~ c(z^j, z') < c(z^k, z')}~,
\end{equation}
with optimal transportation densities obtained by restricting $p$ to each set of the tessellation:
\begin{equation}
p_j(z|x) = \frac{1}{\beta_j^*} p(z|x) \mathcal{I}_j(z)
\end{equation}
where $\mathcal{I}_j(z)$ is the indicator function of the set $L_j$. The optimal weights are given by the following expression:
\begin{equation}
\beta_j^* = \int_{L_j} p(z|x) \de{z}~.
\end{equation}
\end{theorem}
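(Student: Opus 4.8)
The plan is to recognize the inner infimum as a linear program over couplings and solve it by a pointwise (per-mass-element) assignment argument, then verify that the resulting cost is simultaneously minimized over the weights $\beta_j$.

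\textbf{Step 1: reduce to a pointwise assignment.} First I would fix the weights $\beta_1,\dots,\beta_N$ and analyze $\inf_{\zeta \in Z[q_N,p]} \sum_j \beta_j \mean{c(z^j,z')}{z'\sim\zeta(z'|z^j)}$. Writing $\mu_j(z') = \beta_j \zeta(z'|z^j)$, the constraint $\sum_j \beta_j \zeta(z'|z^j) = p(z')$ becomes $\sum_j \mu_j = p$ with each $\mu_j$ a nonnegative measure of total mass $\beta_j$, and the objective is $\sum_j \int c(z^j,z')\,\mu_j(\d{z'})$. This is a semi-discrete transport LP: we must split the continuous mass $p$ into $N$ pieces of prescribed total masses $\beta_j$ so as to minimize the total transport cost to the fixed sites $z^j$. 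The key observation is that for a fixed target mass element at location $z'$, the only thing that matters is which site it is assigned to, and its contribution to the cost is $c(z^j,z')$ for whichever $j$ claims it.

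\textbf{Step 2: lower bound and its attainment.} I would prove the elementary pointwise lower bound
\begin{equation}
\sum_j \int c(z^j,z')\,\mu_j(\d{z'}) \;\ge\; \int \min_j c(z^j,z')\; p(\d{z'})~,
\end{equation}
valid for \emph{any} admissible $(\mu_j)$, since $\sum_j \mu_j = p$ forces each unit of mass at $z'$ to be charged at least $\min_j c(z^j,z')$. This bound is independent of the weights. It is attained exactly when $\mu_j$ is supported on the set where $c(z^j,\cdot)$ is the unique minimizer --- i.e. on the Laguerre-type cell $L_j$ --- up to the measure-zero boundary set $\{z' : c(z^j,z') = c(z^k,z') \text{ for some } j\neq k\}$, which carries no $p$-mass under the implicit regularity assumptions on $c$ and $p$. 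Hence the candidate solution is $\mu_j = p|_{L_j}$, which forces $\beta_j = \mu_j(\mathbb{R}^d) = \int_{L_j} p(z|x)\,\d{z}$ and $\zeta(z'|z^j) = \mu_j/\beta_j = p_j(z|x)$ exactly as in the statement.

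\textbf{Step 3: optimality over the weights.} Since the lower bound $\int \min_j c(z^j,z')\,p(\d{z'})$ does not depend on $(\beta_j)$, and since it is achieved by the tessellation choice for the specific weight vector $\beta_j^* = \int_{L_j} p$, the outer $\arginf_{\beta_1,\dots,\beta_N}$ is automatically solved by that same $\beta^*$: no choice of weights can beat the universal lower bound, and this one meets it. This simultaneously pins down the minimizing weights and shows the naive geometric tessellation (rather than a general power diagram) suffices. I would also note the feasibility check --- that $p_j$ and $\beta_j^*$ indeed lie in $Z[q_N,p]$ --- which is immediate from $\sum_j \beta_j^* p_j = \sum_j p|_{L_j} = p$ a.e.

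\textbf{Main obstacle.} The substantive point is the attainment/uniqueness of the lower bound, i.e. justifying that an optimal $\zeta$ must concentrate on the cells $L_j$ and that the cell boundaries are $p$-negligible; this needs the tie set to have zero Lebesgue measure (true for generic continuous cost, e.g. $c(z,z') = \lpnorm{z-z'}{2}^2$ with distinct sites) and $p$ absolutely continuous, which are the standing assumptions of the semi-discrete setting. Everything else --- the LP reformulation, the pointwise bound, and the decoupling of the weight optimization --- is routine once this measure-theoretic regularity is in place.
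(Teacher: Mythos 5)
Your proposal is correct and follows essentially the same route as the paper's own proof: assign each mass element of $p$ to the site of lowest cost, observe that this forces $\zeta(\cdot|z^j)$ to be the normalized restriction of $p$ to $L_j$, and note that the joint optimization over the weights removes the mass-balance obstruction by simply setting $\beta_j^* = \int_{L_j} p$. Your version is a more careful rendering of the paper's ``it is easy to see'' step --- in particular, making the weight-independent lower bound $\int \min_j c(z^j,z')\,p(\de{z'})$ explicit is exactly the right way to justify the outer $\arginf$ over the $\beta_j$ --- but the underlying argument is the same.
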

\begin{proof}
It is easy to see that transporting each point $z'$ to the particle $z^j$ such that $c(z^j,z')$ is the smallest leads to the smallest possible transportation cost. This implies that the $j$-th transportation density is supported on the set $L_j$ which (up to sets of zero measure) is disjoint from the supports of the other transportation densities. The marginalization constraint then imposes that, in the set $L_j$, $\gamma(z|z^j)$ is proportional to $p(z|x)$. In a general semi-discrete optimal transport problem this solution is not allowed since $\beta_j$ is not necessarily equal to $\int_{L_j} p(z|x) dz$, leading to a violation of the marginalization constraint. However, the solution is always possible in this joint-optimization since we can simply set $\beta_j$ to be equal to $\int_{L_j} p(z|x) dz$.

\end{proof}

\subsection{Deriving the gradient} 
The result at the end of the last section gives a closed-form solution for the particle-based variational loss:
\begin{align} \label{eq: formal loss}
\mathcal{L}(z^1,...,z^N) &= \OT{q_N}{p}{c} \notag\\
&= \sum_j \beta_j^* \int \frac{1}{\beta_j^*} p(z|x) \mathcal{I}_j(z) c(z^j, z) \de{z} \notag\\
&= \mean{\sum_j \mathcal{I}_j(z) c(z^j, z)}{z \sim p(z|x)}~.
\end{align}

In order to use this formula as the basis in a stochastic gradient descent algorithm we need to prove its differentiability. The loss in Eq.~\ref{eq: formal loss} depends on $z^j$ both directly through the cost and indirectly through the boundary of the indicator functions. The problem of differentiating functions of this form is well-known in Eulerian fluid dynamics \cite{leal2007advanced}. The following lemma is a direct consequence of the $n$-th dimensional generalization of the famous Reynolds transport theorem:
\begin{lemma}[Differentiability] \label{th: differentiability}
If the posterior density $p(z|x)$ is continuous and the cost function $c(z_1,z_2)$ is differentiable with continuous partial derivatives, the loss function in Eq.~\ref{eq: formal loss} is differentiable.
\end{lemma}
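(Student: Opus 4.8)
The plan is to express the loss as an integral over a collection of moving domains and differentiate under the integral sign, with the domain boundaries depending on the particle positions. Concretely, write $\mathcal{L}(z^1,\dots,z^N) = \sum_j \int_{L_j} p(z|x)\, c(z^j, z)\, \d{z}$, where the cells $L_j = L_j(z^1,\dots,z^N)$ are defined by the pairwise inequalities $c(z^j, z') < c(z^k, z')$. I would fix an index $i$ and compute $\partial \mathcal{L} / \partial z^i_\ell$ for a single coordinate $\ell$ of the $i$-th particle. Only two kinds of terms can arise: (a) the \emph{interior} term $\int_{L_i} p(z|x)\, \partial_{z^i_\ell} c(z^i, z)\, \d{z}$, coming from the explicit dependence of the integrand on $z^i$ inside its own cell; and (b) \emph{boundary} terms from the motion of the interfaces $\partial L_j \cap \partial L_k$ whenever $i \in \{j,k\}$. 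Since moving $z^i$ only perturbs those facets of the tessellation that are shared with cell $L_i$, the sum of boundary contributions is a sum over neighbours $k$ of surface integrals of the form $\int_{\partial L_i \cap \partial L_k} p(z|x)\,\big(c(z^i, z) - c(z^k, z)\big)\, v_n(z)\, \d{S(z)}$, where $v_n$ is the normal velocity of the facet induced by an infinitesimal change in $z^i$. This is precisely the structure handled by the $n$-dimensional Reynolds transport theorem, so the bookkeeping of terms is the mechanical part.

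The key observation that makes the statement true is that the boundary terms vanish identically. On the interface $\partial L_i \cap \partial L_k$ the defining inequality becomes an equality, $c(z^i, z) = c(z^k, z)$, so the integrand $p(z|x)\,\big(c(z^i, z) - c(z^k, z)\big)$ is zero pointwise on that set. Hence every facet contribution drops out regardless of the (possibly complicated) normal velocity $v_n$, and one is left with $\partial \mathcal{L}/\partial z^i_\ell = \int_{L_i} p(z|x)\, \partial_{z^i_\ell} c(z^i, z)\, \d{z}$, which is manifestly finite and continuous in the particle positions under the stated hypotheses ($p$ continuous, $c$ continuously differentiable, and the cells having piecewise-smooth boundaries of finite measure-boundary). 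I would first justify the applicability of the transport theorem: the cells $L_j$ are intersections of the sublevel-type regions $\{c(z^i,\cdot) < c(z^k,\cdot)\}$, whose boundaries are $C^1$ hypersurfaces (level sets of the $C^1$ function $z \mapsto c(z^i,z) - c(z^k,z)$) away from a negligible set where gradients degenerate, and the facet velocities are well-defined there; continuity of $p$ makes the surface integrals meaningful.

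The step I expect to be the main obstacle is the regularity of the tessellation itself: the Reynolds transport theorem requires the moving domains to have sufficiently regular (e.g.\ Lipschitz or piecewise-$C^1$) boundaries that evolve smoothly, and a priori the zero-sets $\{c(z^i,\cdot) = c(z^k,\cdot)\}$ could be badly behaved — non-smooth, or of positive Lebesgue measure, or with lower-dimensional strata where three or more cells meet. I would handle this by either (i) invoking genericity/an additional mild nondegeneracy assumption (the gradient $\nabla_z\big(c(z^i,z)-c(z^k,z)\big)$ is nonzero on the interface, so the implicit function theorem gives a $C^1$ hypersurface), which is automatic for strictly convex costs such as $c(z,z') = \|z-z'\|^2$ where the cells are ordinary power/Laguerre cells with polyhedral-in-the-Euclidean-case boundaries; and (ii) noting that the set of configurations where higher-order coincidences occur has measure zero and does not affect differentiability on the complement. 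Since the boundary integrand vanishes identically, even a crude estimate on $v_n$ and the surface measure suffices — we never need its exact value — so the argument is robust to the precise smoothness of the interfaces as long as the transport-theorem decomposition is valid in the a.e.\ sense. Assembling these pieces yields differentiability of $\mathcal{L}$, and in fact the clean gradient formula $\nabla_{z^i}\mathcal{L} = \mean{\mathcal{I}_i(z)\,\nabla_{z^i} c(z^i, z)}{z \sim p(z|x)}$, which is what the subsequent stochastic gradient algorithm will use.
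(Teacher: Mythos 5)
Your proposal is correct and rests on the same engine as the paper's proof --- the $n$-dimensional Reynolds transport theorem, splitting $\partial\mathcal{L}/\partial z^i_\ell$ into an interior term and facet terms from the moving interfaces --- but you justify differentiability by a genuinely different (and cleaner) route. The paper keeps the boundary integrals in the expression for the partial derivative and asserts that the whole sum is ``a sum of integrals of continuous functions,'' hence continuous, deferring the cancellation of the boundary terms to the separate Gradient theorem (Theorem~\ref{th: gradient}); this forces one to argue continuity of surface integrals over facets that themselves move with the particles, which is the least convincing step of the paper's argument. You instead observe immediately that each facet $\partial L_i \cap \partial L_k$ carries the integrand $p(z|x)\bigl(c(z^i,z)-c(z^k,z)\bigr)v_n$, which vanishes pointwise because the facet is precisely the set where $c(z^i,\cdot)=c(z^k,\cdot)$, so only the interior term survives and continuity of the partial derivatives is immediate from continuity of $p$ and of $\partial c$. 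In effect you prove Lemma~\ref{th: differentiability} and Theorem~\ref{th: gradient} in one pass. You also flag, and partially discharge, a hypothesis the paper silently assumes: that the bisector sets $\{c(z^i,\cdot)=c(z^k,\cdot)\}$ are regular enough (piecewise $C^1$, measure-zero higher-order coincidences) for the transport-theorem decomposition to apply; strictly speaking this requires a nondegeneracy condition on $\nabla_z\bigl(c(z^i,z)-c(z^k,z)\bigr)$ that neither you nor the paper verifies in general, though it holds for the squared Euclidean cost actually used. The net effect is that your argument is more robust where the paper's is hand-wavy, at the cost of importing the boundary-cancellation idea one step earlier than the paper's exposition does.
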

\begin{proof}
Using Reynolds transport theorem, we can write the partial derivative with respect to the $h$-th component of the $k$-th particle as follows:
\begin{equation} \label{eq: partial derivative}
\frac{\partial \mathcal{L}}{\partial z^k_{h}} = \sum_j \left[ \int_{L_j} \frac{\partial c}{\partial z^k_{h}}\!(z^j,z) p(z|x) \de{z}  + \int_{\partial L_j} c(z^j,z) \left( v_j(z) \cdot d\Sigma_j(z) \right) \right]~.
\end{equation}
The second integral in this expression is defined over the frontier $\partial L_j$ of $L_j$. The differential $v_j(z) \cdot d\Sigma_j(z)$ gives the component of the velocity of the boundary $v_j(z)$ that is parallel to the surface normal vector $d\Sigma_j(z)$. The partial derivative is clearly continuous since it is a sum of integrals of continuous functions. The statement follows from the differentiability theorem. 
\end{proof}
The partial derivatives in Eq.~\ref{eq: partial derivative} suggest that the gradient descent dynamics of the particles is driven by two terms. The first term drives the $j$-th particle towards the centroid of its own set $L_j$. Under this dynamics, the particles interact only by explaining away parts of the posterior, thereby screening the other particles from the attractive forces contained in their own set. The second term is defined at the frontiers of the sets and seems to imply the existence of more direct interactions. However, this term is actually zero as stated in the following theorem:

\begin{theorem}[Gradient] \label{th: gradient}
The gradient of Eq.~\ref{eq: formal loss} with respect to the position of the $j$-th particle is given by the following expression:
\begin{equation} \label{eq: formal gradient}
    \nabla_j \mathcal{L}(z^1,...,z^N) = \mean{\nabla_j c(z^j, z)}{z \sim p_j(z|x)}~.
\end{equation}
\end{theorem}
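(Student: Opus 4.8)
\emph{Strategy.} The plan is to take the partial derivative furnished by Lemma~\ref{th: differentiability}, namely Eq.~\ref{eq: partial derivative} (with the differentiated particle relabelled $z^j$ and the dummy index called $\ell$), and show that the surface integrals over the cell frontiers cancel in pairs, so that only the bulk term survives; that bulk term is then exactly the claimed expectation against $p_j$. Equivalently, since the $L_\ell$ tessellate $z$-space and are defined by a pointwise minimum of costs, $\mathcal{L}(z^1,\dots,z^N) = \mean{\min_\ell c(z^\ell, z)}{z\sim p(z|x)}$ by the last line of Eq.~\ref{eq: formal loss}, and the statement is an envelope-theorem (Danskin-type) computation under the integral sign; I would present the Reynolds-transport route since it matches the lemma already proved, and keep the envelope-theorem argument as a cross-check.

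\emph{Bulk term.} Because $c(z^\ell, z)$ depends on the position $z^j$ only for $\ell = j$, the sum $\sum_\ell \int_{L_\ell} \frac{\partial c}{\partial z^j_{h}}(z^\ell, z)\, p(z|x)\,\de{z}$ collapses to the single integral $\int_{L_j} \frac{\partial c}{\partial z^j_{h}}(z^j, z)\, p(z|x)\,\de{z}$. By the definition of the restricted densities $p_j(z|x) = \frac{1}{\beta_j^*} p(z|x)\,\mathcal{I}_j(z)$ from Theorem~\ref{th: formal solution}, this equals $\beta_j^*\, \mean{\frac{\partial c}{\partial z^j_{h}}(z^j,z)}{z \sim p_j(z|x)}$, i.e.\ the component form of the right-hand side of Eq.~\ref{eq: formal gradient} (up to the weight $\beta_j^*$ coming from the normalisation convention for $p_j$). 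Collecting the components gives $\mean{\nabla_j c(z^j,z)}{z\sim p_j(z|x)}$, so everything reduces to killing the boundary contribution.

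\emph{Vanishing of the boundary term (the crux).} It remains to show $\sum_\ell \int_{\partial L_\ell} c(z^\ell, z)\,(v_\ell(z)\cdot d\Sigma_\ell(z)) = 0$. I would use three observations. First, perturbing $z^j$ moves only those faces of the tessellation that separate $L_j$ from a neighbouring cell $L_m$; every other face of every $\partial L_\ell$ has zero velocity, so the only nonzero contributions come from the shared interfaces $\Sigma_{jm} := \partial L_j \cap \partial L_m$, with $m$ a neighbour of $j$. Second, on $\Sigma_{jm}$ the strict inequalities defining the cells degenerate to $c(z^j,z) = c(z^m,z)$, so the weight $c(z^\ell, z)$ (and $p(z|x)$, continuous by hypothesis) takes the same value whether $\Sigma_{jm}$ is approached from $L_j$ or from $L_m$. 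Third, $\Sigma_{jm}$ is a single moving hypersurface carrying a single velocity field, but it enters $\partial L_j$ and $\partial L_m$ with opposite outward orientations, so $d\Sigma_j(z) = -\,d\Sigma_m(z)$ along $\Sigma_{jm}$ and hence $v_j\cdot d\Sigma_j = -\,v_m\cdot d\Sigma_m$ there. Combining the second and third facts, the $\Sigma_{jm}$-part of the $\partial L_j$ integral cancels the $\Sigma_{jm}$-part of the $\partial L_m$ integral; summing over the neighbours $m$ annihilates the whole boundary term, and Eq.~\ref{eq: formal gradient} follows.

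\emph{Remaining technicalities and the main obstacle.} One still has to discard the lower-dimensional strata of the tessellation (edges and corners where three or more cells meet), but these carry no $(n{-}1)$-dimensional surface measure and contribute nothing. The genuine difficulty is making the third observation rigorous: one must set up a common parametrisation of the moving interface $\Sigma_{jm}$ that both adjacent cells inherit, and check that the signed normal-velocity flux $v\cdot d\Sigma$ is exactly odd under exchanging the two cells while the integrand is continuous (indeed equal) across the interface. This is the same cancellation that removes boundary terms in the classical differentiation of Laguerre / power-diagram cell integrals, and it is precisely where the ``explaining away''/envelope structure of the semi-discrete transport problem does the work.
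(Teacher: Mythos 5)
Your proposal is correct and follows essentially the same route as the paper: take the Reynolds-transport expression from Lemma~\ref{th: differentiability}, collapse the bulk sum to the single integral over $L_j$, and kill the boundary term by pairing the two sides of each shared interface, where the costs agree by the defining equality $c(z^j,z)=c(z^m,z)$, the interface velocity is common, and the outward normals are opposite. You even flag the one point the paper glosses over, namely that the bulk integral equals $\beta_j^*\,\mean{\nabla_j c(z^j,z)}{z \sim p_j(z|x)}$ rather than the unweighted expectation, so the stated gradient holds only up to that normalisation factor; your envelope-theorem cross-check via $\mathcal{L}=\mean{\min_\ell c(z^\ell,z)}{z\sim p(z|x)}$ is a nice addition but not needed.
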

\begin{proof}
The partial derivative in Eq.~\ref{eq: partial derivative} can be rewritten as follows:
\begin{equation} 
\frac{\partial \mathcal{L}}{\partial z^k_{h}} = \int_{L_j} \frac{\partial c}{\partial z^k_{h}}\!(z^k,z) p(z|x) \de{z}  + \sum_j \int_{\partial L_j} c(z^j,z) \left( v_j(z) \cdot d\Sigma_j(z) \right)~.
\end{equation}
The first integral in this expression gives the gradient in the statement. Consequently, we need to show that the term
\begin{equation} \label{eq: boundary term}
\sum_j \int_{\partial L_j} c(z^j,z) \left( v_j(z) \cdot d\Sigma_j(z) \right)~
\end{equation}
is equal to zero. It is easy to see that each boundary $\partial L_j$ is the intersection of two-particles boundaries:
$$
\partial L_j = \bigcap_h \mathcal{B}_{jh}~,
$$
where $\mathcal{B}_{jh} = \set{z}{c(z_j,z) = c(z_k,z)}$. Consequently, Eq.~\ref{eq: boundary term} can be decomposed into a sum of integrals over subsets of these two-particles boundaries:
$$
\sum_{j,h \neq j} \left[ \int_{A_{j,k} \subset \mathcal{B}_{jh}} c(z^j,z) \left( v_j(z) \cdot d\Sigma_j(z)\right) + \int_{A_{j,k} \subset \mathcal{B}_{jh}} c(z^k,z) \left( v_k(z) \cdot d\Sigma_k(z) \right) \right]~,
$$
where the two terms of this expression are the two integrals at the opposite side of a boundary. The velocities $v_j(z)$ and $v_k(z)$ are equal since they describe the motion of the same boundary. The two costs $c(z^j,z)$ and $c(z^k,z)$ are also equal on the boundary by definition. Finally, the unit length normal elements $d\Sigma_j(z)$ and $d\Sigma_k(z)$ have opposite direction as a set ends where the other begins. Consequently, the two integrals have equal magnitude and opposite sign and the final result is equal to zero.
 
\end{proof}

\section{Wasserstein variational gradient descent} 
We can now introduce the WVGD algorithm. The gradient descent dynamics of the system of $N$ particles is given by the following system of differential equations:
\begin{equation}
\dot{z}^j(t) = -\mean{\nabla_j c(z^j, z)}{z \sim p_j(z|x)}~.
\end{equation}
It is instructive to study some special cases. If we use a single particle, the loss function becomes:
\begin{equation}
\mathcal{L}(z^1) = \mean{c(z^1, z)}{z \sim p(z|x)}~.
\end{equation}
The minimum of this loss is the $c$-medoid of the posterior distribution \cite{park2006k}. When $c$ is the squared Euclidean distance, the one-particle dynamics converges to the posterior mean. This behavior differs from SVGD, where the one-particle case reduces to the gradient ascent of the log posterior \cite{liu2016stein}. In the multi-particle case particles interact by screening part of the posterior density by moving the boundaries of the sets $L_j$. Consider the two-particles case in one dimension with $z^1 < z^2$. The velocity field of particle one can be decomposed into two terms:
\begin{equation}
\dot{z}^1(t) = \int_{-\infty}^{\infty} c(z^1, z) dp(z|x) - r_{1,2}~.
\end{equation}
The first term is the global attractive effect of the posterior density on particle one while the second term is an apparent repulsive interaction between particle one and particle two:
\begin{equation}
r_{1,2} = \int_{(z^1 + z^2)/2}^{\infty} c(z^1, z) dp(z|x)~.
\end{equation}
This repulsive interaction is a function of the posterior mass contained in the half-axis covered by the second particle. This is in stark contrast with SVGD where the repulsive interactions are only a function of the location of the particles. 

\section{Adaptive importance sampling and ensemble variational inference} 
Since we cannot directly sample from the posterior, to obtain an Monte Carlo estimate of the gradient we resort to importance sampling. The resulting (discretized) stochastic dynamics has the following form:
\begin{align}
z^k(t + 1) &= z^k(t) - \frac{\lambda}{A} \sum_m \alpha_m \left[\nabla_k c(z^k(t), \zeta^m) \right], \notag\\
\zeta^m &\sim q_j(\theta_j(t))~.
\end{align}
where $\lambda$ is the learning rate, $q_j(z_m; \theta_j(t))$ is the importance sampling distribution of the $j$-th particle parameterized by $\theta_j(t)$, $\alpha_m = p_j(\zeta^m|z^j)/q_j(\zeta^m; \theta_j)$ is an importance weight and $A = \sum_m \alpha(z)$ is a normalization term. The variance of the gradient can vary greatly depending on the choice of the sampling distribution and, since the distributions $p_j(z|x)$ shift during the time development of the system, it is important to update the sampling distributions together with the positions of the particles \cite{karamchandani1989adaptive}. A flexible choice for the adaptive importance sampling dynamics is to descend the gradient of the reverse KL divergence between the parameterized sampling distributions and the transportation densities:
\begin{equation}
\theta_j(t + 1) = \theta_j(t) - \epsilon \nabla_{\theta_j} \KL{q_j(z; \theta_j(t))}{p_j(z|x)}~.
\end{equation}
An interesting side effect of this choice is that we simultaneously obtain the positions of the particles and tractable approximations for the transportation densities. Therefore, the resulting algorithm provides both the $N$-particles approximation and an associated continuous approximation that can be seen as a form of ensemble VI. In the one-particle case the algorithm reduces to conventional VI.

We parameterize the sampling distribution $q_j$ as the restriction of a tractable distribution $q$ to the set $L_j$:
\begin{equation}
    q_j(z;\theta_j) = \frac{1}{Z_j(\theta_j)} \mathcal{I}_j(z) q(z; \theta_j)~,
\end{equation}
where
\begin{equation}
    Z_j(\theta_j) = \int_{L_j} q(z; \theta_j) \de{z}~.
\end{equation}
For example, if $q$ is a multivariate Gaussian, the sampling distribution will be a truncated Gaussian. Note that this restriction is necessary since the KL divergence can only be defined between distributions that share the same support set and the optimal transportation density is supported on $L_j$. The gradient of the KL divergence is given by:
\begin{align}
\nabla_{\theta_j} \KL{q_j(z)}{p_j(z|x)} = - \nabla_{\theta_j} \mean{\log{p(z,x)}}{z \sim q_j} + \nabla_{\theta_j} \mathcal{S}[q_j]~,
\end{align}
where $\mathcal{S}[q_j]$ is the entropy of $q_j$. 
An unbiased estimate of the gradient of an expectation with respect to $q_j$ can be obtained using the standard reparameterization trick \cite{rezende2014stochastic} since the set $L_j$ does not depend on $\theta_j$. The only difference with regular evidence lower bound (ELBO) maximization is that samples are rejected if they fall outside $L_j$. A problem in computing the gradient of the entropy is that we do not have a closed-form expression for the normalization constant $Z_j(\theta_j)$. However, an expression for the gradient of the entropy is given by the following theorem:
\begin{theorem}[Gradient of the entropy] \label{th: gradient entropy}
The gradient of $\mathcal{S}[q_j]$ with respect to the position of the parameters $\theta_j$ is given by the following expression:
\begin{align}
    \nabla_{\theta_j} \mathcal{S}[q_j] &= \nabla_{\theta_j} \mean{\log{q(z;\theta_j)}}{z \sim q_j} - \nabla_{\theta_j} \log{Z_j(\theta_j)} \notag \\
    &= \nabla_{\theta_j} \mean{\log{q}(z;\theta_j)}{z \sim q_j} - \mean{\nabla_{\theta_j}\log{q(z;\theta_j)}}{z \sim q_j}~.
\end{align}
\end{theorem}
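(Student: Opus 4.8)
The plan is to isolate the log-normalizer $\log Z_j(\theta_j)$ — the only quantity appearing in $\mathcal{S}[q_j]$ that is not directly amenable to the reparameterization trick — and to show that its gradient is an ordinary $q_j$-expectation of a score function. First I would rewrite the entropy. Since the indicator $\mathcal{I}_j$ is identically one on the support $L_j$ of $q_j$, we have $q_j(z;\theta_j) = q(z;\theta_j)/Z_j(\theta_j)$, hence $\log q_j(z;\theta_j) = \log q(z;\theta_j) - \log Z_j(\theta_j)$ for $q_j$-almost every $z$. Substituting this into the defining integral of $\mathcal{S}[q_j]$ and pulling the $z$-independent term $\log Z_j(\theta_j)$ out of the expectation gives $\mathcal{S}[q_j] = \mean{\log q(z;\theta_j)}{z \sim q_j} - \log Z_j(\theta_j)$; differentiating with respect to $\theta_j$ — the gradient acting on the integrand and on the sampling measure $q_j$ alike — yields the first line of the statement.

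For the second line I would establish the score identity $\nabla_{\theta_j}\log Z_j(\theta_j) = \mean{\nabla_{\theta_j}\log q(z;\theta_j)}{z \sim q_j}$ directly. The key structural point — in contrast with the setting of Lemma~\ref{th: differentiability} — is that the domain of integration $L_j$ does not depend on $\theta_j$, so no moving-boundary (Reynolds) term arises and the elementary Leibniz rule suffices: $\nabla_{\theta_j} Z_j(\theta_j) = \int_{L_j}\nabla_{\theta_j} q(z;\theta_j)\,\de{z} = \int_{L_j} q(z;\theta_j)\,\nabla_{\theta_j}\log q(z;\theta_j)\,\de{z}$, where the last step uses $\nabla_{\theta_j} q = q\,\nabla_{\theta_j}\log q$. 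Dividing through by $Z_j(\theta_j)$ and recognizing $q(z;\theta_j)/Z_j(\theta_j) = q_j(z;\theta_j)$ on $L_j$ rewrites the right-hand side as $\mean{\nabla_{\theta_j}\log q(z;\theta_j)}{z \sim q_j}$; substituting this back into the first line closes the argument.

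The algebra is routine, so the main obstacle is purely a matter of regularity: justifying the interchange of $\nabla_{\theta_j}$ and $\int_{L_j}$ in the step above. For this it suffices that $\theta \mapsto q(z;\theta)$ be differentiable and that $\lVert \nabla_\theta q(z;\theta) \rVert$ be dominated, uniformly for $\theta$ in a neighborhood of $\theta_j$, by a fixed integrable function on $L_j$; I would either impose this as a hypothesis or verify it for the exponential-family and truncated-Gaussian parameterizations of $q$ actually used in the algorithm, where it is immediate. It seems worth noting explicitly in the write-up that the $\theta_j$-independence of $L_j$ is exactly what collapses the general Reynolds-transport machinery used in Lemma~\ref{th: differentiability} to the ordinary Leibniz rule here, which is why no surface term survives.
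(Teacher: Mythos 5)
Your argument is correct and follows essentially the same route as the paper: decompose $\log q_j = \log q - \log Z_j$ on $L_j$ and then establish $\nabla_{\theta_j}\log Z_j(\theta_j) = \mean{\nabla_{\theta_j}\log q(z;\theta_j)}{z \sim q_j}$ by differentiating under the integral, which is exactly the paper's three-line computation. Your additional observations --- that the $\theta_j$-independence of $L_j$ is what eliminates any Reynolds-type boundary term, and that a domination hypothesis is needed to justify the interchange of $\nabla_{\theta_j}$ and $\int_{L_j}$ --- are worthwhile refinements the paper leaves implicit.
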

\begin{proof}
\begin{align}
    \nabla_{\theta_j} \log{Z_j(\theta_j)} &= \frac{1}{Z_j} \nabla_{\theta_j} \int_{L_j} q(z; \theta_j) dz \notag \\ 
    &= \int_{L_j} \left(\nabla_{\theta_j}\log{q(z;\theta_j)} \right) \frac{q(z;\theta_j)}{Z_j} dz \notag \\
    &= \mean{\nabla_{\theta_j}\log{q(z;\theta_j)}}{z \sim q_j}
\end{align}
\end{proof}

\subsection{Partitioned evidence lower bound} 
Using the ensemble of transportation densities, we can obtain a lower bound of the model evidence. Consider the partition $\Pi$ induced by the sets $L_j$. We define the partitioned ELBO (PELBO) as follows:
\begin{align}
    \text{PELBO}\left[p,q, \Pi \right] = \sum_j \beta_j^* \ELBO{q_j}{p_j}~,
\end{align}
where $\beta^*_j$ is the optimal weight defined in Theorem~\ref{th: formal solution}. Using the Jensen's inequality, it is easy to show that the PELBO is an evidence lower bound: 
\begin{align}
    \sum_j \beta_j^* \ELBO{q_j}{p_j}  \leq  \sum_j \beta_j^* \log{p_j(x)} \leq \log{\!\left( \sum_j \beta_j^* p_j(x) \right)} = \log{p(x)}~,
\end{align}
where $p_j(x)$ is given by $\int p_j(z,x) \de{z}$. Note that the weights $\beta_j^*$ are not available in closed-form and need to be approximated by importance sampling.

\section{Interpretability of the particle representation}
An interesting feature of WVGD is that the partition of the parameter space into regions covered by individual particles often leads to a semantically interpretable partition of the posterior distribution. An interesting example is given by the posterior distribution of the covariance hyper-parameters of a Gaussian process regression. Consider the following quasi-periodic covariance function:
\begin{equation}
k(t,t') = A e^{-(t - t')/{2 s^2}} \cos{\left( 2 \pi f (t - t') \right)} + B \delta(t - t')
 \label{eq: GP covariance function}
\end{equation}
where $A$ is the amplitude of the oscillatory component, $f$ is the oscillatory frequency, $s$ is the smoothness of the oscillatory envelope and $B$ is the amplitude of the white noise component. Different values of these hyper-parameters corresponds to radically different functional behavior. If the period $1/f$ is small compared with the length scale $s$, then the resulting function is smooth but not oscillatory. Conversely, oscillations emerge when $1/f$ if smaller than the envelope length scale. This can lead to two different "interpretations" of the data. In one "interpretation" the data is not oscillatory and the fluctuations are explained by the white noise. In the other the ampliotude of the noise is low and the fluctuations are exmplained by the oscillatory nature of the process. These "interpretations" are reflected by different modes of the posterior distribution over the hyper-parameters of the covariance function. In WVGD, idividual particles can model these semantically meaningful modes leading to an interpretable partition of the parameter space. This behavior is shown on figure \ref{fig: GP results}.

\begin{figure}[ht]
    \centering
    \includegraphics[width=0.7\textwidth]{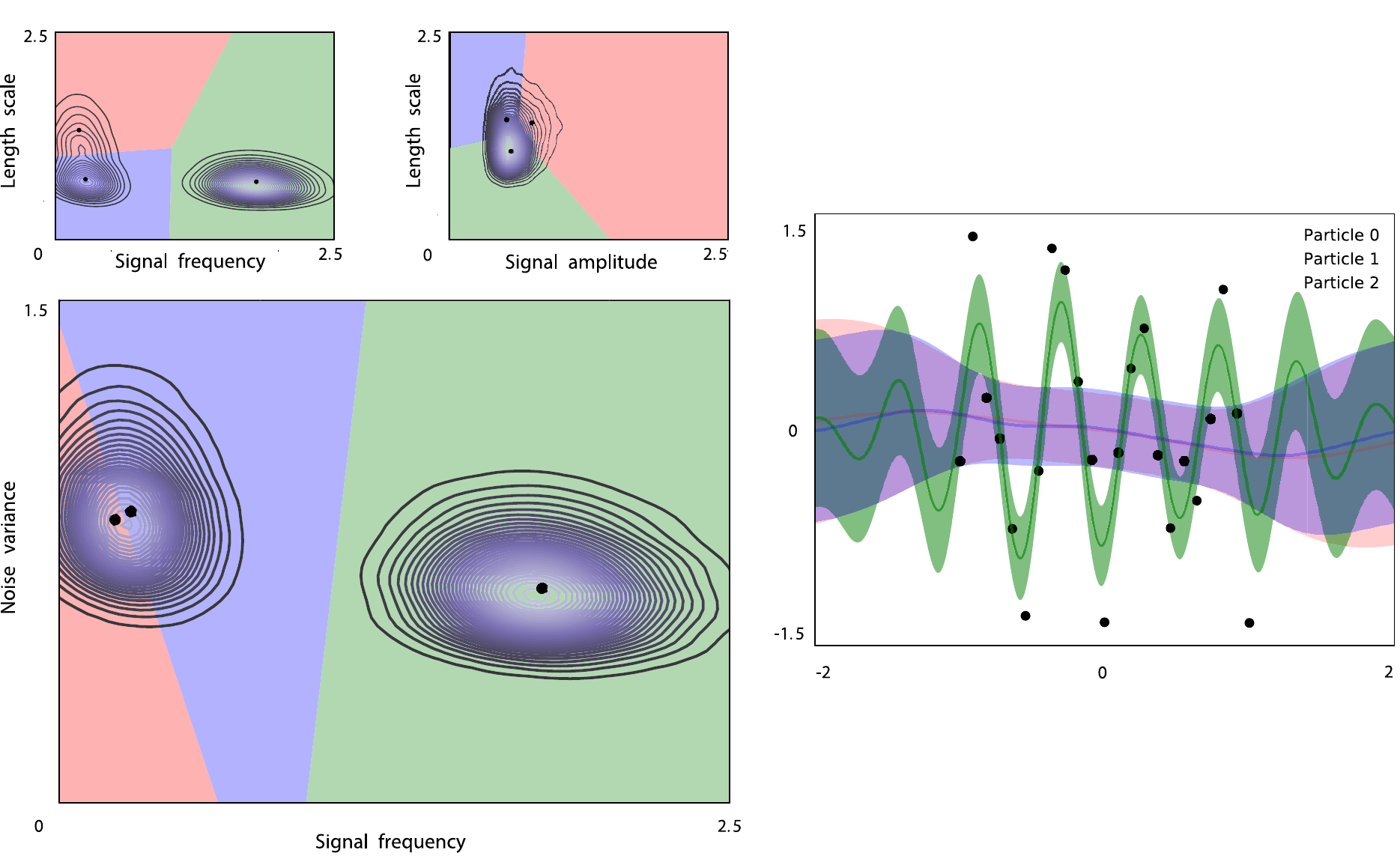}
    \caption{Partition of the posterior distribution of the covariance hyper parameters of a Gaussian process. The color denotes the region of the parameter space associated with each particle.}
    \label{fig: GP results}
\end{figure}

%\section{ensemble of Bayesian networks}
%$c(W_1,W_2) = \text{arcos}\!\left({\frac{\mean{f(x;W_1)f(x;W_1)}{}}{\sqrt{\mean{f(x;W_2)^2}{}\mean{f(x;W_1)^2}{}}}}\right)$

%TODO: Theorem of universal approximation

\section{Experiments} 
\subsection{Mixture of Gaussians}

\begin{figure}[ht]
    \centering
    \includegraphics[width=0.8\textwidth]{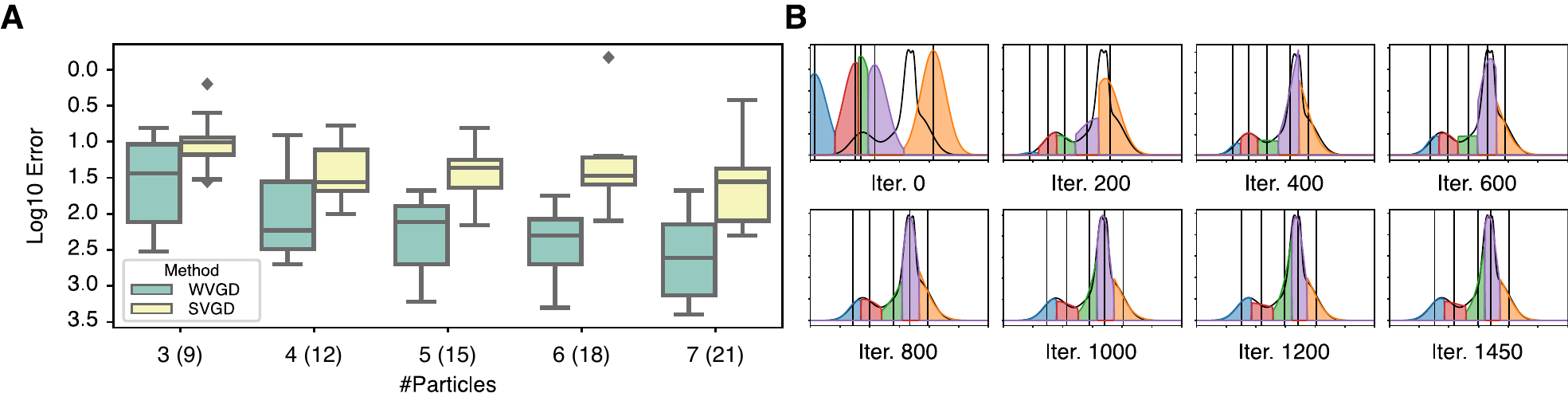}
    \caption{A) Squared error of WVGD and SVGD as a function of the number of particles. SVGD was trained with three times as many particles as WVGD. B) Dynamics of WVGD particles and transportation densities during training.}
    \label{fig: results}
\end{figure}

As a first step, we validate our WVGD method on a simple mixture of Gaussians problem. Random target posterior distributions were obtained by sampling the mixture ($\sim \mathcal{U}(0, 1)$), location ($\sim N(0,1.5)$) and scale parameters ($\sim \mathcal{U}(0.1, 1))$ of a mixture of five Gaussian distributions. We quantified the performance of WVGD and SVGD using the squared error between posterior density and variational approximation. The cost function in WVGD was the squared euclidean distance. We used a a Gaussian kernel for SVGD. The bandwidth was adapted during training using the formula $bw = \text{md}^2/\log{n}$,  where $\text{md}$ is the median of the distances between the particles as suggested in \cite{liu2016stein}. The SVGD density was obtained by using kernel density estimation while the WVGD density was given by its transportation densities. In order to have a fair comparison, we fit the bandwidth parameter of the kernel density estimation individually on each groundtruth posterior, this leads to a small bias in favor of SVGD. Furthermore, since the WVGD method has three times more parameters than SVGD, we compared the error of WVGD with the error of SVGD with three times as many particles. The experiment was repeated $15$ times for each number of particles ($3(9), 4(12), 5(15), 6(18)$ and $7(21)$) with a randomly generated groundtruth density. The histograms of the errors are shown in Fig.~\ref{fig: results}. WVGD has consistently higher performance than SVGD.

\subsection{Logistic regression}
We tested the performance of the method as function of the number of particles in a Bayesian logistic regression experiment on real data. We used the iris, Boston house pricing and breast cancer and diabetes datasets~\citep{harrison1978hedonic, efron2004least, street1993nuclear}. Continuous targets were binarized based on being bigger or smaller than the median. The weights of our models had standard Gaussian prior distributions and the variational posterior assigned to each particle was a fully factorized Gaussian over the weights. Note that the one-particle case is our baseline as it corresponds to standard stochastic VI. We quantify the performance as the ELBO (PELBO).  We did not include a comparison with SVGD as it is not straightforward to obtain an estimate of the ELBO using this method. Each experiment was repeated $20$ times on a random subsectection of $n = 50$ datapoints of the training set. The means of the ELBOs as function of the number of particles are given in the table, where the error is quantified as the standard error of the mean. As expected, the ELBO tend to increase as function of the number of particles. Forthermore, the multi-particle analysis always perform better than standard VI.

\begin{table}[ht]
\begin{center} 
 \begin{tabular}{l| l l l l l} 
  &  VI (1p) & 2p & 3p & 4p & 5p\\ [0.5ex] 
\hline
%Boston &  $-245.43 \pm 32.77$ & $-222.24 \pm 19.00$ & $-198.48 \pm 24.36$ & $-172.71 \pm 13.73$ & $-160.29 \pm 5.55$   \\ 
Boston &  $-245 \pm 32$ & $-222 \pm 19$ & $-198 \pm 24$ & $-172 \pm 13$ & $\bf{-160 \pm 5}$   \\ 
%Diabetes &  $-27.09 \pm 0.39$ & $-25.97 \pm 0.41$ & $-25.32 \pm 0.48$ & $-25.64 \pm 0.34$ & $-25.50 \pm 0.42$   \\ 
Diabetes &  $-27.1 \pm 0.4$ & $-26.1 \pm 0.4$ & $\bf{-25.3 \pm 0.5}$ & $-25.6 \pm 0.3$ & $-25.5 \pm 0.4$   \\ 
% \hline
 Iris  & $-3.41 \pm 0.27$ & $-3.15 \pm 0.28$ & $-2.29 \pm 0.19$ & $\bf{-2.11 \pm 0.21}$ & $-2.24 \pm 0.17$ \\ 
 %\hline
 %Breast cancer & $-535.036 \pm 78.72$ & $-255.30 \pm 22.33$ & $-252.99 \pm 21.79$ & $-217.07 \pm 14.45$ & $-205.78 \pm 15.39$ 
  Cancer & $-535 \pm 78$ & $-255 \pm 22$ & $-252 \pm 21$ & $-217 \pm 14$ & $\bf{-205 \pm 15}$  
  
\end{tabular} 
 \label{table1}
\end{center}
\end{table}

%\section{Conclusion} 
%In this paper we introduced WVGD as a new form of particle-based variational inference based on optimal transport theory. An interesting feature of our method is that it also provides an ensemble of truncated continuous densities that are variational approximations of the optimal transportation densities. In our experiments we showed that WVGD has higher performance than SVGD in a mixture of Gaussians problem.

%\section{Experiments} 
%\subsection{Mixture of Gaussians} 
%\subsection{Logistic regression} 
%\subsection{Mixture of Gaussian processes} 

\bibliographystyle{unsrtnat}
\bibliography{reference}

\end{document}